\def\BibTeX{{\rm B\kern-.05em{\sc i\kern-.025em b}\kern-.08em
    T\kern-.1667em\lower.7ex\hbox{E}\kern-.125emX}}
\newcommand\scalemath[2]{\scalebox{#1}{\mbox{\ensuremath{\displaystyle #2}}}}
\begin{document}

\title{Goal-Aware Neural SAT Solver}
\author{
\IEEEauthorblockN{Emils Ozolins, Karlis Freivalds, Andis Draguns, \\
Eliza Gaile, Ronalds Zakovskis, Sergejs Kozlovics}
\IEEEauthorblockA{Institute of Mathematics and Computer Science\\
University of Latvia\\
\texttt{ozolinsemils@gmail.com}, \texttt{karlis.freivalds@lumii.lv}}
}

\maketitle

\begin{abstract}

Modern neural networks obtain information about the problem and calculate the output solely from the input values. We argue that it is not always optimal, and the network's performance can be significantly improved by augmenting it with a query mechanism that allows the network at run time to make several solution trials and get feedback on the loss value on each trial. To demonstrate the capabilities of the query mechanism, we formulate an unsupervised (not depending on labels) loss function for Boolean Satisfiability Problem (SAT) and theoretically show that it allows the network to extract rich information about the problem. We then propose a neural SAT solver with a query mechanism called QuerySAT and show that it outperforms the neural baseline on a wide range of SAT tasks.

\end{abstract}

\section{Introduction}

Boolean Satisfiability Problem (SAT) is a significant NP-complete problem with numerous practical applications -- product configuration, hardware verification, and software package management, to name a few. There is no single efficient algorithm that solves every SAT problem, but heuristics have been developed that are sufficient for solving many real-life tasks.

Neural networks have demonstrated superior performance on many complex tasks, e.g., image and language processing, protein folding,  and playing board and computer games. So we may expect that, in the long run, neural networks could also replace handcrafted heuristics for SAT problems. 

There are attempts for applying neural networks to solve SAT by integrating them in contemporary solvers or replacing them altogether. \cite{selsam2018learning} proposes training NeuroSAT architecture with single-bit supervision, predicting whether the SAT instance is satisfiable.
Yet, it does not directly produce variable assignments and requires labels for the training that may be troublesome to obtain. \cite{amizadeh2018learning} introduces unsupervised loss for training a neural network to solve a related Circuit-SAT problem without requiring labels. They use differentiable constraint relaxation to evaluate the network output and penalise the network for violating constraints. That leads to successful training for Circuit-SAT problems.

Usually, neural networks are trained by minimizing the loss which is obtained at the final layer. We ask whether it is beneficial to let the network know the loss it is currently producing and adapting its output correspondingly. That is not possible with the traditional supervised losses since labels are not available at the inference time. Yet, such an approach is sound for an unsupervised loss and, indeed, provides an excellent basis for designing a neural SAT solver. 

In this paper, we introduce a step-wise recurrent neural SAT solver that at each step comes up with a query of variable assignments, evaluates it with an unsupervised loss, and updates its state based on the evaluation results.  At every step, it outputs a target solution that may contain information from all the queries performed so far. 

We prove that the proposed query mechanism employing unsupervised loss function as query evaluator can provide the neural network with information about (a) satisfiability status of a solution if queried at integer points, (b) reveal the structure of the problem instance if queried at fractional (real) points. That provides rich information about the structure, meaning of the problem, and its solution.

\begin{figure}[t]
    \centering
    \includegraphics[width=\columnwidth]{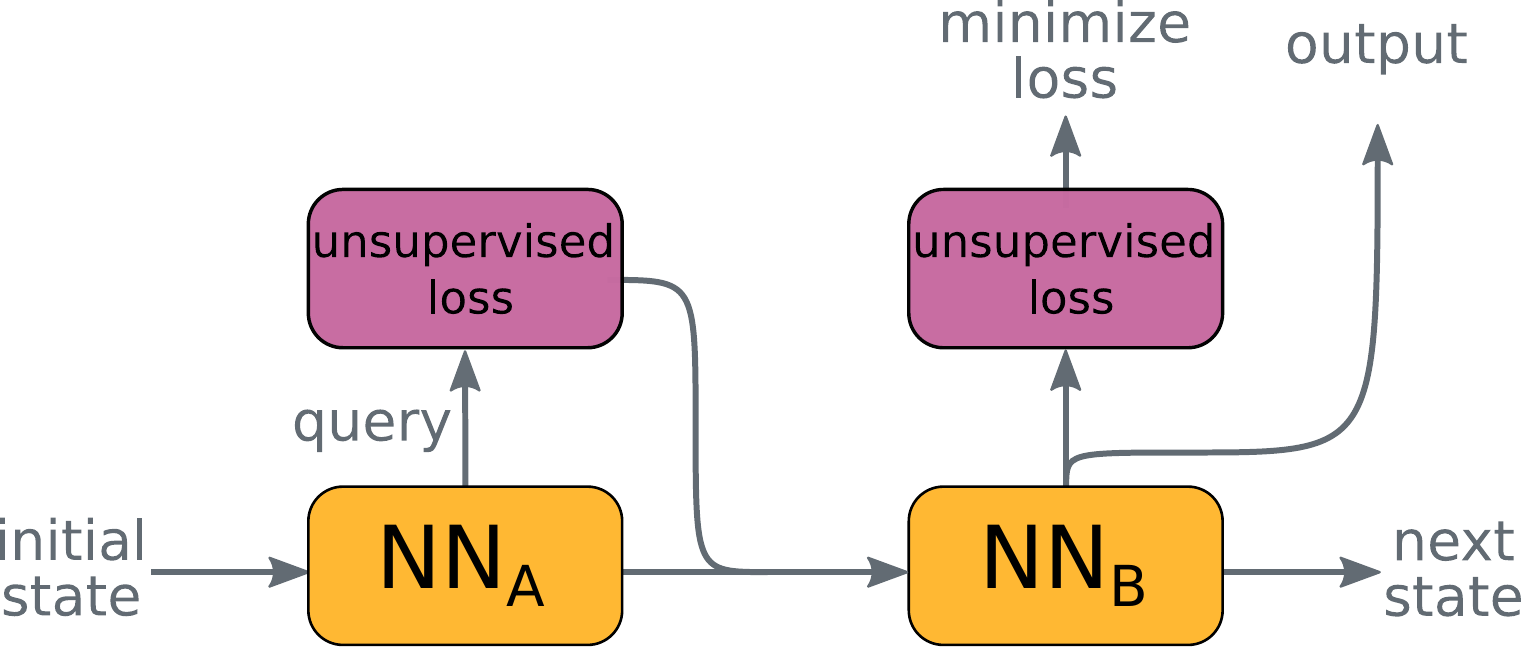}
    \caption{The proposed query mechanism works by producing a query, evaluating it using an unsupervised loss function, and passing the resulting value back to the neural network for interpretation. It allows the model to obtain the structure and meaning of the solvable instance and information about the expected model output. The same unsupervised loss can be used for evaluating the query and for training.
    }
    \label{fig:query_overview}
\end{figure}

The proposed architecture is evaluated on several SAT problems, and it outperforms the neural baseline on all of them. We also compare it with classical SAT solvers (Glucose 4 and GSAT) on 3-SAT and SHA-1 preimage attack tasks. Appendix of this paper is available on \url{https://github.com/LUMII-Syslab/QuerySAT/blob/master/appendix.pdf}.

\section{Query mechanism}\label{sec:querysat}
We hypothesize that allowing a neural network to make several solution trials at the runtime and getting feedback about them can significantly improve network capabilities in some cases. It is easy to implement almost for any neural network, yet depending on the unsupervised loss function, some networks can be especially suitable for such augmentation.

The simplest such implementation (depicted in Fig. \ref{fig:query_overview}) consists of two different neural network layers. The first layer ($NN_A$) is given the current state ($s_{r}$) and it outputs a query ($q$) and some hidden state ($h$). We then use the unsupervised loss function ($loss$) to evaluate the query and obtain evaluation results ($e$). It is important to use a loss function that does not require labels for evaluation, as they may not be available at the test time. The evaluation results ($e$) together with the hidden state ($h$) are then passed as the input for the second layer ($NN_B$) that is responsible for interpreting the evaluated query and producing the next state ($s_{r+1}$) and output logits ($l$).  We can calculate the gradient of the evaluation results ($e$) with respect to query ($q$). The gradient can be optionally added as the third input for the second layer to show the network the direction for decreasing the loss \cite{andrychowicz2016learning}. The query mechanism is conceptually shown in Fig. \ref{fig:query_overview} and can be defined as follows:
\begin{align*}
        q, h &= NN_{A}(s_{r}) \\
        e &= loss(q) \\
        s_{r+1}, l &= NN_{B}(h,\; e,\; \nabla_q e) \\
        \mathcal{L} &= loss(l).
\end{align*}

Calculating output logits and loss $\mathcal{L}$ at every step is not noteworthy per se but merely demonstrates that the same loss function can be used for both -- query evaluation and model training. The training loss in principle could be evaluated only once -- at the network's final layer and can be different from the loss used for the query evaluation. Nevertheless, it is essential not to merge the loss (and logits) used for query with those used for training. Although constructed similarly, their meaning and functioning are different (see Appendix \ref{apx:query_use}).

\section{Query mechanism for SAT}\label{sec:query_mechanism}

We use the Boolean Satisfiability Problem (SAT) as the testbed for validating the benefits of the query mechanism. To this end, we propose an unsupervised loss function for SAT problems in conjunctive normal form and theoretically show that the query mechanism is indeed beneficial -- it gives the model access to the problem structure and contributes to the performance of the model. 

\subsection{Boolean Satisfiability Problem}
Boolean Satisfiability Problem (SAT) questions whether there exists an interpretation (True or False assignments to the variables) that satisfies the given Boolean formula. We undertake a related problem -- finding a set of variable assignments that satisfies the given Boolean formula. Throughout the paper, we stick to common practice \cite{biere2009handbook} and represent SAT formulas solely in a conjunctive normal form (CNF) -- conjunction of one or more clauses where a clause is a disjunction of literals.

Any SAT formula can be naturally represented as a bipartite variables-clauses graph, likewise known as an SAT factor graph \cite{biere2009handbook}. In such a graph, the edge between variables and clauses graph exists whenever the variable is present in the clause. Two types of edges are used to distinguish a variable from its negation.
Variables-clauses graph of SAT formula with $n$ variables and $m$ clauses can be represented as a sparse $n \times m$ adjacency matrix. In order to perform batching, several SAT instances can be placed into a single factor graph yielding a single adjacency matrix for the whole batch.

\subsection{Unsupervised SAT loss}\label{sec:sat_loss}
A common way to train neural networks is by using a supervised loss, such as cross-entropy, which matches the network outputs with the correct labels. But such an approach does not work for training variable assignment for SAT due to several possible satisfying assignments for a single SAT instance. Also, obtaining labels involve SAT solving, which is time-consuming for large instances. Moreover, as we want to integrate the loss function into the neural network as a query mechanism, it has to be differentiable and cannot rely on labels as they are not available at the test time. 

Therefore, we design a differentiable unsupervised loss function that directly optimizes towards finding a satisfiable variable assignment of the Boolean formula without knowing a correct solution. To this end, we relax the Boolean domain to continuous variables in the range $[0, 1]$ where 0 corresponds to False and 1 to True. The vector of all variable values in the assignment is denoted by $x$. The value $V_c$ for a clause $c$ is obtained by multiplying together negations of all literals in the clause and negating the result and the loss value $\mathcal{L_{\phi}}$ for a formula $\phi$ by multiplying all the clause values of $\phi$ together: 
\begin{align*}
    V_c(x) &= 1-\prod_{i \in c^+}(1-x_i)\prod_{i \in c^-}x_i, \\
    \mathcal{L_{\phi}}(x) &= \prod_{c \in \phi}V_c(x), \\
\end{align*}
where $x_i$ is the value of $i$-th variable and $c^+$ gives the set of variables that occur in the clause $c$ in the positive form and $c^-$ in the negated form. 
The values $V_c(x)$ and $\mathcal{L_{\phi}}(x)$ are equal to 1 if and only if $x$ is a satisfying assignment of clause $c$ or formula $\phi$, respectively, and strictly smaller otherwise. So by maximizing $\mathcal{L}_{\phi}(x)$ we can hope to find a satisfying assignment to the variables. But this function is not usable in practice since it often yields zero value in machine precision due to multiplying together many clause values, which are all less than one. Therefore, we use the negative logarithm of $\mathcal{L}_{\phi}$ and minimize it:
\begin{align*}
    \mathcal{L}_{\phi}^{\log}(x) &= -\log(\mathcal{L}_{\phi}(x)) = -\sum_{c \in \phi}\log(V_c(x)).
\end{align*}
Taking the logarithm of the loss does not change its minimum/maximum structure, so minimizing $\mathcal{L}^{\log}_{\phi}$ is equivalent of maximizing $\mathcal{L}_{\phi}$. 

A model trained by minimizing $\mathcal{L}^{\log}_{\phi}$ produces “soft” variable assignments in the range $[0,1]$. We experimentally observe that when the network can find a solution, it is close to binary and rounding the values to the nearest integer produces excellent results. 

Other works \cite{Ryan2020CLN2INV:, pmlr-v97-fischer19a}, albeit in different contexts, have also proposed relaxing SAT formulas to continuous truth values similar to $\mathcal{L}_{\phi}$ loss (one not in the log-space) but faces the same problems as $\mathcal{L}_{\phi}$ if used for learning SAT solutions directly. Therefore, log-space formulation $\mathcal{L}^{\log}_{\phi}$ of the loss  trades capabilities of modelling general Boolean formulas for a loss function that gives non-zero loss for large CNF formulas and can be directly applied for learning their solutions. 

\subsection{Power of the query}
The proposed loss function, when used in the query mechanism, gives the satisfiability status of a solution if queried at the binary points 0 or 1 and reveals the structure of the formula if queried at the intermediate (real) points. To maximize the amount of information gained by the query and to match the internal structure of Graph Neural Network (GNN), we employ queries that return the loss value $V_c(x)$ for each clause at the query point $x$. The power of such query mechanism is formulated in the Theorem \ref{thm:int_points} and \ref{thm:instance_to_be_solved}, respectively. The proofs of theorems are deferred to the Appendix \ref{apx:int_points} and \ref{apx:instance_to_be_solved}.

\begin{restatable}{theorem}{thintpoints}\label{thm:int_points}
For a binary query point $x$ the losses $\mathcal{L}_{\phi}(x)$ or $V_c(x)$ are equal to 1 if the formula $\phi$ or clause $c$ is satisfied and 0 otherwise.
\end{restatable}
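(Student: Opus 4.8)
The plan is to argue by a direct case analysis on the literal values, exploiting that a binary query point restricts every factor appearing in the product to the set $\{0,1\}$. First I would treat a single clause $c$. Since each $x_i \in \{0,1\}$, every factor $(1-x_i)$ for $i \in c^+$ and every factor $x_i$ for $i \in c^-$ is itself $0$ or $1$, hence their product is $0$ or $1$ as well, and consequently $V_c(x)=1-(\cdots)$ also lands in $\{0,1\}$.

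Next I would identify exactly when each factor equals $1$. For a positive occurrence $i \in c^+$ the factor $1 - x_i$ equals $1$ iff $x_i = 0$, i.e. iff the literal $x_i$ is \emph{false}; for a negative occurrence $i \in c^-$ the factor $x_i$ equals $1$ iff $x_i = 1$, i.e. iff the literal $\neg x_i$ is false. Thus each factor equals $1$ precisely when its associated literal is falsified by $x$. A product of $\{0,1\}$-valued factors equals $1$ iff all of them equal $1$, so $\prod_{i \in c^+}(1-x_i)\prod_{i\in c^-}x_i = 1$ iff every literal of $c$ is false — which is exactly the condition that the disjunction $c$ is unsatisfied — and equals $0$ iff at least one literal is true, i.e. $c$ is satisfied. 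Substituting into $V_c(x) = 1 - (\cdots)$ then yields $V_c(x)=1$ when $c$ is satisfied and $V_c(x)=0$ otherwise.

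Finally I would lift this to the whole formula. At a binary point each $V_c(x)\in\{0,1\}$ by the above, so $\mathcal{L}_\phi(x) = \prod_{c\in\phi} V_c(x)$ is again a product of $\{0,1\}$-valued terms and equals $1$ iff every $V_c(x)=1$, i.e. iff every clause is satisfied, which is precisely the definition of $\phi$ being satisfied; if any clause is unsatisfied the corresponding $V_c$ vanishes and the whole product is $0$.

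I do not expect a genuine obstacle here: the statement is essentially the observation that the continuous relaxation collapses to ordinary Boolean algebra once restricted to $\{0,1\}^n$. The only point demanding care is the polarity bookkeeping — confirming that the factor $1-x_i$ (rather than $x_i$) is paired with positive literals and that ``factor $=1$'' consistently means ``literal false'', so that the inner product detects an \emph{unsatisfied} clause and the outer $1-(\cdot)$ in the definition of $V_c$ comes out with the correct orientation.
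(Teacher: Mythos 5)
Your proposal is correct and follows essentially the same route as the paper's proof: both reduce to observing that at a binary point each factor of the inner product is $1$ exactly when its literal is falsified, so the product detects an unsatisfied clause, and the formula-level claim follows by taking the product over clauses. The only difference is presentational — you state the per-factor biconditional once and derive both directions, whereas the paper splits into two explicit cases (satisfying vs.\ unsatisfying assignment) — but the underlying argument is identical.
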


\begin{restatable}{theorem}{thsolvedinstances}\label{thm:instance_to_be_solved}
A single query that returns the loss $V_c$ for each clause $c$ is sufficient to uniquely identify the SAT formula $\phi$ to be solved.
\end{restatable}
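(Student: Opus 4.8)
The plan is to reduce the statement to an \emph{injectivity} claim about the clause-value map and then establish that injectivity at a suitably chosen query point. A single query at a point $x \in (0,1)^n$ returns the vector $(V_c(x))_{c \in \phi}$, and since
\begin{equation*}
1 - V_c(x) = \prod_{i \in c^+}(1-x_i)\prod_{i \in c^-} x_i,
\end{equation*}
recovering $\phi$ amounts to recovering, for every clause, the pair of disjoint sets $(c^+, c^-)$ from the single real number $V_c(x)$. Hence it suffices to exhibit one query point $x$ at which the map sending a clause (equivalently, a pair of disjoint subsets of the $n$ variables) to its value $V_c(x)$ is injective: then each returned coordinate identifies its clause, and the multiset of returned values reconstructs $\phi$.

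First I would linearize the product by taking logarithms. Writing $w_i^+ = -\log(1-x_i)$ and $w_i^- = -\log x_i$, the query value satisfies
\begin{equation*}
-\log\bigl(1 - V_c(x)\bigr) = \sum_{i \in c^+} w_i^+ + \sum_{i \in c^-} w_i^-.
\end{equation*}
Two distinct clauses $c, c'$ therefore collide exactly when the corresponding $\{-1,0,1\}$-combination of the $2n$ weights $\{w_i^+, w_i^-\}_{i=1}^n$ vanishes (the coefficient of $w_i^+$ is $[i\in c^+]-[i\in c'^+]$, and similarly for $w_i^-$). Consequently, injectivity holds as soon as these $2n$ weights admit no nontrivial vanishing $\mathbb{Q}$-combination; in particular, $\mathbb{Q}$-linear independence of $\{w_i^+, w_i^-\}$ is sufficient.

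It remains to show that such a query point exists. For each fixed nonzero integer coefficient vector $(a,b)$, the set of offending points is $\{x \in (0,1)^n : \prod_i (1-x_i)^{a_i} x_i^{b_i} = 1\}$, the unit level set of a nonconstant real-analytic function, which has Lebesgue measure zero. Since only finitely many coefficient patterns arise from differences of clauses, the union of all bad sets still has measure zero, so almost every $x \in (0,1)^n$ (in particular avoiding $x_i = \tfrac12$, which would equate $w_i^+$ and $w_i^-$) yields the desired linear independence and hence injectivity. This proves the theorem; one can alternatively give an explicit $x$ whose logarithmic weights are rationally independent, e.g. built from logs of distinct primes.

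I expect the main obstacle to be this existence step: the clause-value map must separate \emph{all} (exponentially many) possible clauses simultaneously from a single scalar per clause, so the argument cannot rely on a few clauses being "generic" but must rule out every $\{-1,0,1\}$-relation among the weights at once. Framing collisions as vanishing log-combinations turns this into a clean statement about linear independence, after which the measure-zero exclusion does the work; the subtlety to watch is that the two weights of a single variable must themselves be independent, which the argument handles by excluding the midpoint $x_i = \tfrac12$.
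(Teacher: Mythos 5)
Your proof is correct, but it takes a genuinely different route from the paper's. Both arguments reduce the theorem to injectivity of the clause-value map at a single query point, but where you argue by genericity --- collisions between clauses become vanishing $\{-1,0,1\}$-combinations of the $2n$ logarithmic weights $w_i^{\pm}$, each such relation cuts out a measure-zero level set of a nonconstant real-analytic function, and the finitely many (at most $3^{2n}$) relations leave almost every $x\in(0,1)^n$ usable --- the paper instead gives an explicit construction: it sets $x_i = H/b_i$ using prime pairs $(a_i,b_i)$ with a fixed gap $H$ (invoking the bounded-gaps-between-primes result to guarantee infinitely many such pairs), so that $1-V_c(x)$ becomes an irreducible fraction $a_1\cdots a_i H^{k-i}/(b_1\cdots b_k)$ whose prime factorization, by the fundamental theorem of arithmetic, directly encodes which variables occur in the clause and with which sign. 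Your approach is more elementary (it needs no number theory beyond measure-zero sets of analytic varieties) and proves the stronger statement that almost every query point works, at the cost of being non-constructive; the paper's approach yields a concrete decodable query but leans on a deep prime-gap theorem and, as the paper itself concedes, requires precision beyond floating-point arithmetic. One small caveat on your closing aside: an explicit point ``built from logs of distinct primes'' cannot be obtained by freely assigning $w_i^+$ and $w_i^-$ to logs of distinct primes, since both weights are determined by the single value $x_i$ (one would need $1-x_i$ and $x_i$ to be simultaneously reciprocals of primes); your main measure-theoretic argument does not depend on this aside, so the proof stands, but the explicit-point claim as stated would need the kind of coupled construction the paper actually carries out.
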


The proof of theorem \ref{thm:int_points} follows immediately from the definitions of the losses. The proof of theorem \ref{thm:instance_to_be_solved} shows how to create a query point $x$ in such a way that the literals making up the clauses can be uniquely decoded solely from the clause losses. The proof assumes that we know the variable and clause count of the formula, but this assumption can be relaxed by padding all the formulas to some fixed maximum size. A stronger result that a single query returning only the total loss $\mathcal{L}_{\phi}$ (a single real number) might also be shown by using more intricate reasoning, but we did not pursue this direction since we use per-clause loss. 

The construction used in theorem \ref{thm:instance_to_be_solved} assumes sufficiently high precision of the numbers, exceeding the limits of standard floating-point arithmetic. Regarding that, in our implementation, we issue several queries in parallel (up to 128) and organize the computation in multiple recurrent steps in which queries are performed repeatedly and one query can depend on the results of the previous one. Such design significantly enhances the power of queries and relieves the need for high precision.

\section{QuerySAT}
We validate the theory in practice by building a neural SAT solver that we call QuerySAT and evaluate its performance on a wide range of SAT tasks - k-SAT, 3-SAT, 3-Clique, k-Coloring, and SHA-1 preimage attack. Experimental evaluation is performed on a single machine with 16GB RAM and a single Nvidia T4 GPU (16GB) using AdaBelief optimizer \cite{zhuang2020adabelief}. Code for reproducing experiments is implemented in TensorFlow and is available at \url{https://github.com/LUMII-Syslab/QuerySAT}.

\subsection{Model}\label{sec:querysat_model}

\begin{figure*}[ht]
     \centering
     \begin{minipage}[t]{0.32\textwidth}
         \centering
         \includegraphics[width=\textwidth]{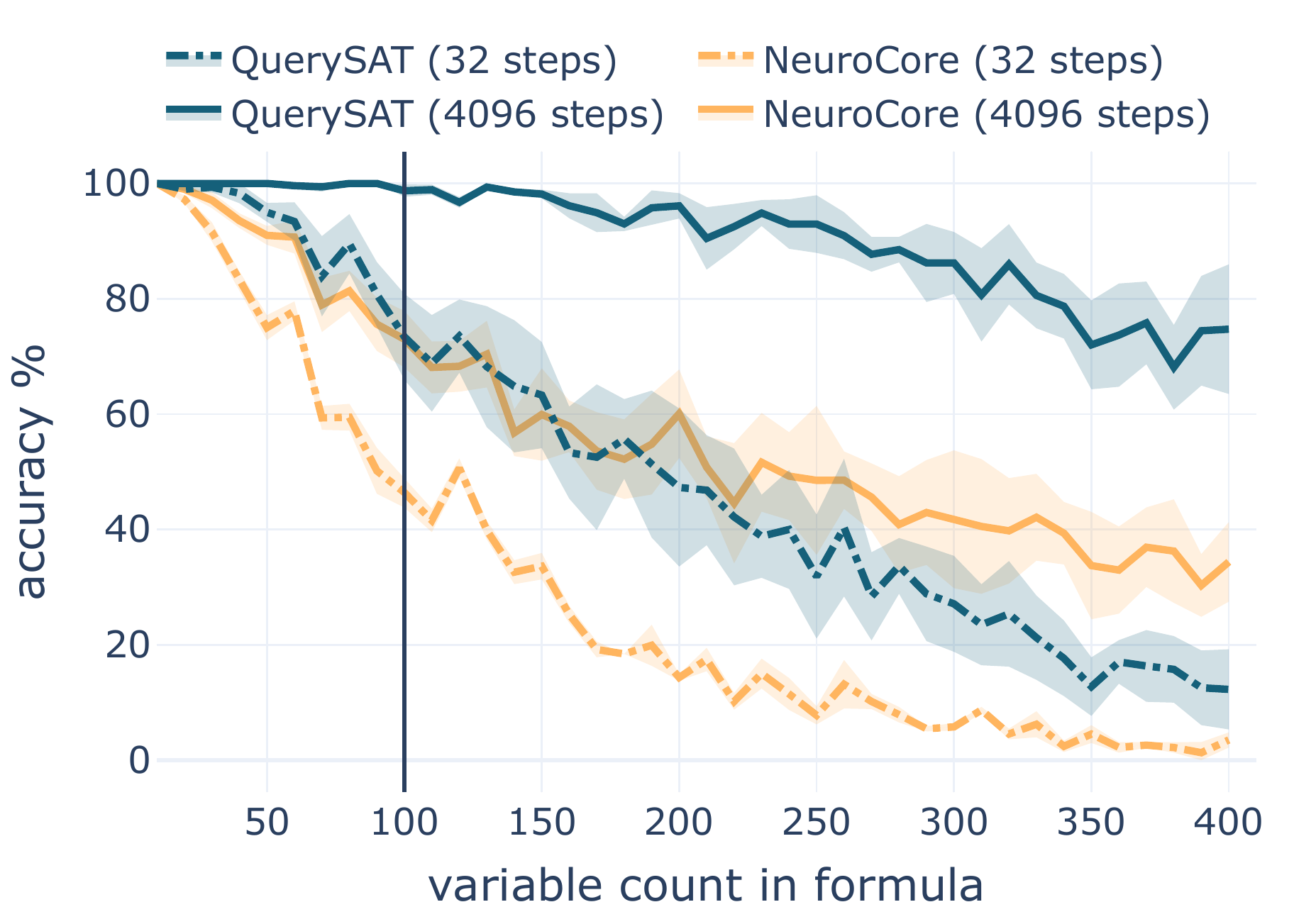}
         \caption{Part of fully solved 3-SAT instances of the test set depending on the variable count. Models were trained with 32 recurrent steps on formulas with up to 100 variables.}
         \label{fig:accuracy_on_datasets}
     \end{minipage}
     \hfill
     \begin{minipage}[t]{0.32\textwidth}
         \centering
         \includegraphics[width=\textwidth]{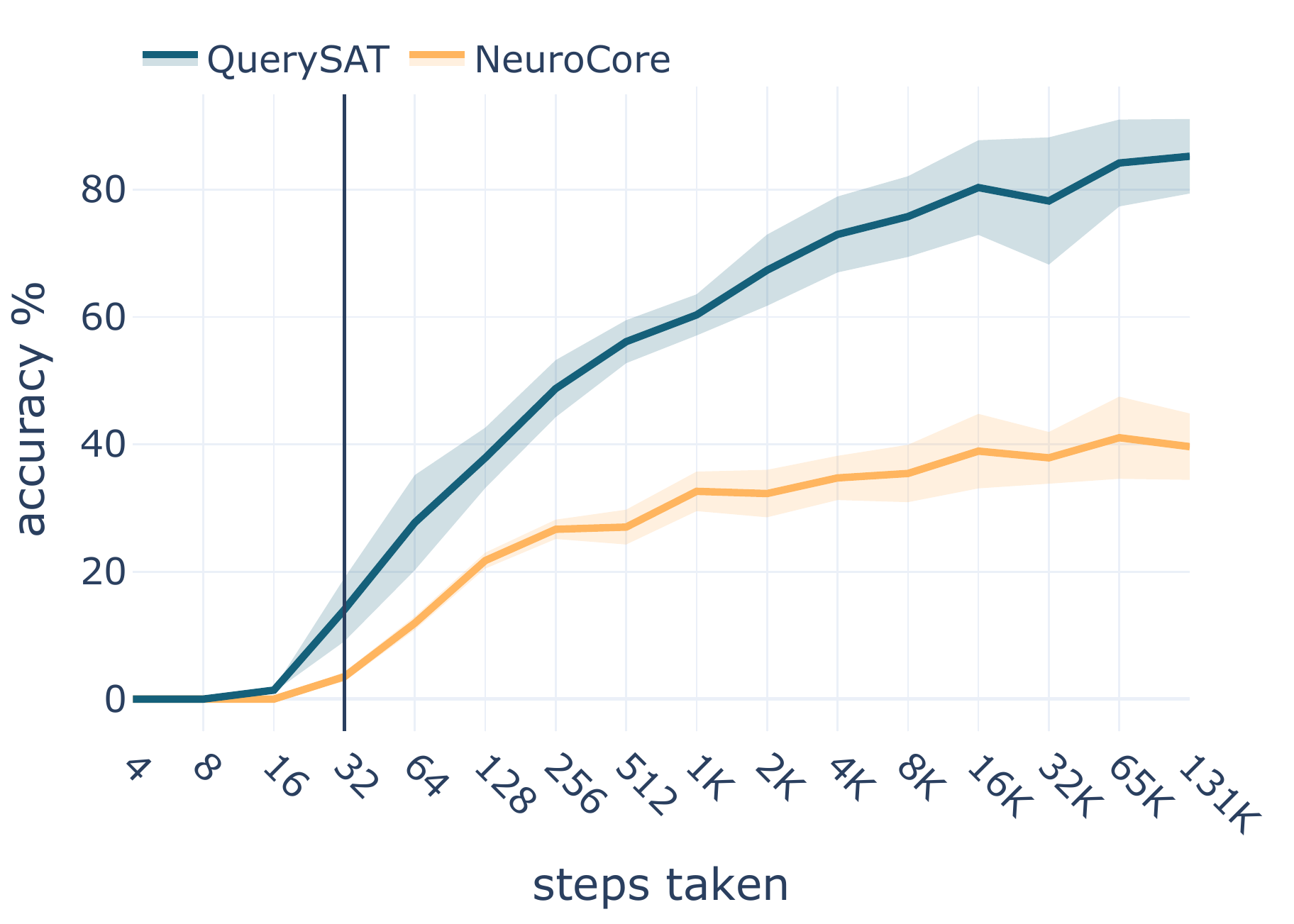}
         \caption{Part of fully solved 3-SAT instances with 400 variables depending on the steps taken at test time. Models were trained with 32 recurrent steps on formulas with up to 100 variables.}
         \label{fig:accuracy_on_steps}
     \end{minipage}
     \hfill
     \begin{minipage}[t]{0.32\textwidth}
         \centering
         \includegraphics[width=\textwidth]{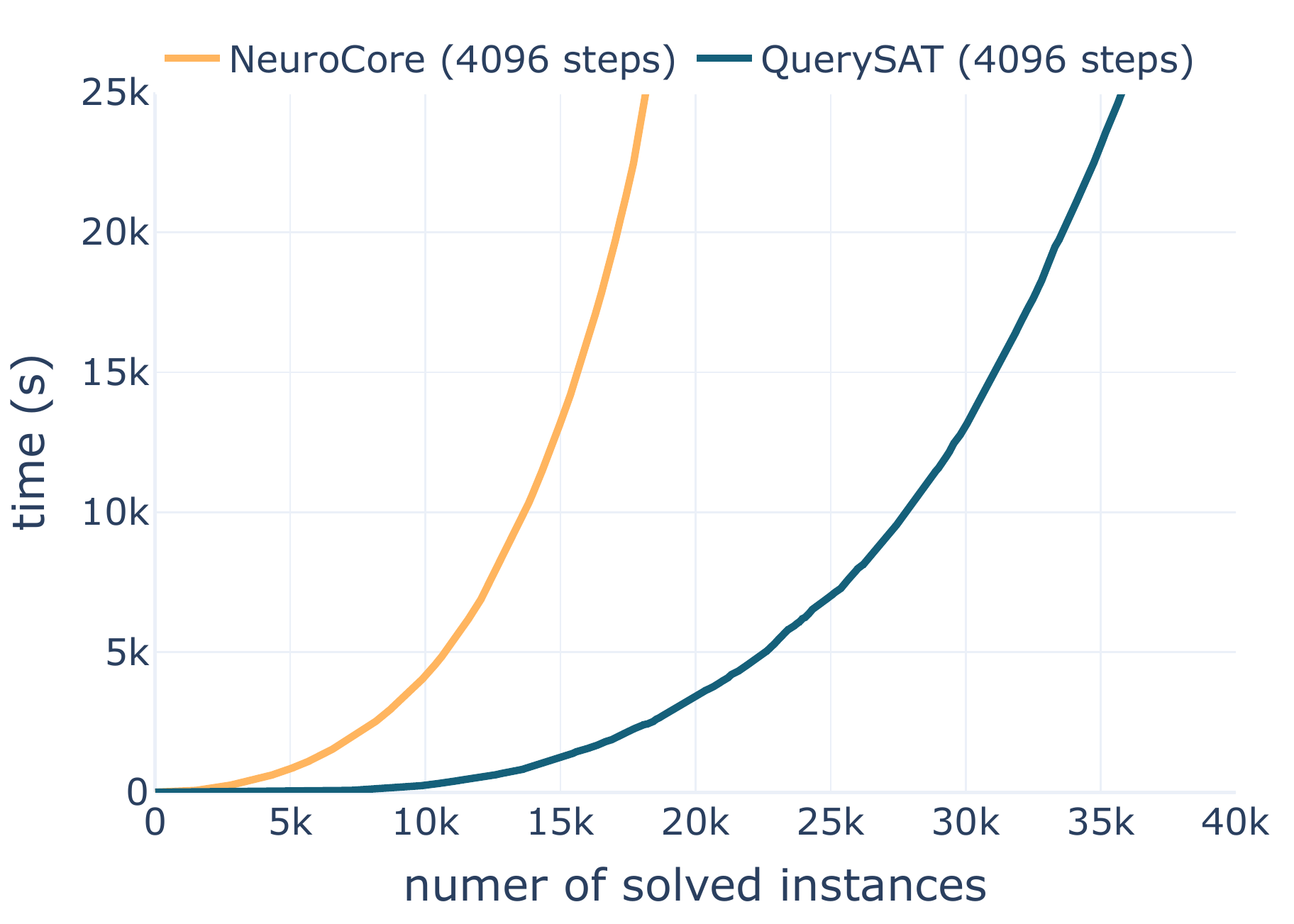}
         \caption{Time vs fully solved instances trained with 32 recurrent steps on 3-SAT instances with 5-100 variables and then tested on instances with 5-405 variables.}
         \label{fig:neural_solver_cactus}
     \end{minipage}
\end{figure*}

QuerySAT is based on a GNN employing the proposed query mechanism and unsupervised SAT loss function. It receives a CNF Boolean formula $\phi$ in the input represented as two adjacency matrices of variables-clauses graphs - $A_{p} \in \{0,1\}^{n \times m}$ and  $A_{n} \in \{0,1\}^{n \times m}$, where $n$ and $m$ are the number of variables and clauses, respectively.  
$A_{p}$ represents all positive variable mentions in the clauses, and $A_{n}$ represents all negated variable mentions. The network outputs a vector $out \in [0,1]^n \subseteq \mathbb{R}^n$ -- a variable assignment. 
QuerySAT works in a step-wise manner with recurrent application of the following graph-based recurrent unit: 
\begin{align*}\label{formula:querysat}
    q_i &= \sigma (\text{MLP}_q(v_i,\; t)) \\
    e_i &= \mathcal{V_{\phi}}(q_i) \\
    c_{i+1} &= \text{PairNorm}(\text{MLP}_c(c_i,\; e_i)) \\ 
    v_{i+1} &= \text{PairNorm}(\text{MLP}_v(v_i,\; A_{p} c_{i+1},\;  A_{n} c_{i+1},\; \nabla_{q_i}e_i)) \\
    out &= \sigma (\text{MLP}_o(v_{i+1})).
\end{align*}

At the beginning an empty state vector initialized with all ones is allocated for each variable and each clause and the unit is recurrently applied to them $s_{train}$ steps at training and $s_{test}$ steps at evaluation. At each step $i$ starting from $i=0$, QuerySAT produces a query $q_i \in \{0,1\}^{n \times d}$, where $d$ is the feature map count, by applying a  2-layer multi-layer perceptron $\text{MLP}_q$ to the variables state $v_i \in \mathbb{R}^{n \times d}$. A random noise vector $t \in \mathbb{R}^{n \times r}$ is also given in the input to this MLP. The query is range-limited by the sigmoid function $\sigma$ and evaluated by our unsupervised loss. Here we employ per-clause loss defined as $\mathcal{V}_{\phi}(q_i)=\{V_c(q_i))\; | \; c \in \phi\}$ which returns the vector of clause losses $e_i$. 

We then obtain the new clauses state $c_{i+1} \in \mathbb{R}^{m \times d}$ by applying a 2-layer perceptron $\text{MLP}_c$ and PairNorm \cite{zhao2019pairnorm} to the previous clauses state $c_i$ and the query result $e_i$.  Query mechanism thus replaces a variables-to-clauses message passing step that would be present in a classical message-passing architecture.
Then, information aggregation from clauses-to-variables is performed by message-passing, which for some variable sums all the clause states in which the variable occurs. Positive and negative occurrences are treated separately and are implemented as a sparse matrix multiplication between the clause state $c_{i+1}$ and occurrence matrices $A_{p}$ and $A_{n}$.

The new variables state $v_{i+1} \in \mathbb{R}^{n \times d}$ is then obtained  by applying a 3-layer perceptron $\text{MLP}_v$ and PairNorm to the variables state $v_i$, aggregated clause messages and the query loss gradient with the respect to query $q_i$. The new variables state is then mapped to the output variable assignment $out \in [0,1]^n$ using another 2-layer perceptron $\text{MLP}_o$ and sigmoid function $\sigma$. In all MLP layers we use LeakyReLU activation. 

The model is trained using the loss function $\mathcal{L}_{\phi}^{\log}$ proposed in section \ref{sec:sat_loss}. The loss is calculated from variable assignments $out$ at each step, and the sum of all losses is minimized. Using the loss at each time-step has shown performance improvements \cite{palm2017recurrent, amizadeh2019pdp, ozolins2020matrix} versus a single loss calculation at the end. Also, it enables using many more steps in evaluation than in training. Without impacting the model's performance, we use a conditional exit at any step if the variable assignment satisfies the input formula.

Although giving an additional noise parameter $t$ to $\text{MLP}_q$ seems like a minor detail, it serves several purposes. First, it avoids a blowup in the gradient during training, which may result from applying normalization to zero values. Secondly, it enables the model to learn a randomized algorithm by providing greater diversity between the queries, which is especially needed for obtaining good accuracy in testing with many steps. Thirdly, it allows the model to perform differently in several evaluations (we observed that in the experiments), mimicking restarts in the classical SAT solvers. We find that for our purposes $t \in \mathcal{N}(0,1)^{n \times r}$, where $r=4$, works well.

\subsection{Training tricks}\label{sec:tricks}
In this section, we describe the model architecture and training tricks that are not per se important for the power of the proposed architecture yet slightly improves model performance.

\textbf{Gradient scaling.} As we calculate loss and minimize it at each step, large gradient values may accumulate in the backwards pass for the first-time steps. Such uneven gradient distribution slows down training, and to mitigate that, we introduce gradient scaling and apply it to the variables and clauses states. It works by downscaling the gradient in the backward pass at each time-step as follows: $stop\_gradient(x)\alpha + x(1-\alpha)$, where $x$ is value and $\alpha \in [0,1]$ is a hyperparameter. We find that for QuerySAT with 32 recurrent steps, $\alpha = 0.2$ works the best.

\textbf{Multi-assignment loss.} Instead of returning a single variable assignment, we let the model return several $out \in [0,1]^{n \times u}$, where $u \in \mathbb{Z}^+$ is hyperparameter. The $u=8$ works best for us. For each assignment, we calculate loss value using the proposed loss function $\mathcal{L}_{\phi}^{\log}$ and then obtain the final step loss as a weighted sum of assignment losses. Weighting is done as follows - we sort assignments losses in descending order and enumerate them from $1$ to $u$. Let's call each such number the loss index. Then we calculate the final loss as the sum of each loss value multiplied with its squared index and then dividing the sum with the sum of squared indices. From our observations, such multi-assignment loss only gives improvements when used with unsupervised loss as the model in the early stages of training is free to explore various outputs. The assignment with the lowest loss value (before weighting) is promoted as the final model assignment. 

\subsection{Evaluation}\label{sec:querysat_evaluation}

\begin{table*}[ht]
\centering
\caption{Mean test accuracy (higher is better) as per cent of fully solved instances from the test set over 3 consecutive runs. Both models were trained with 32 recurrent steps for 500k training iterations and then tested with 32, 512, and 4096 recurrent steps. Value after $\pm$ indicates the standard error.}
\label{tab:sat_tasks_results}
\begin{adjustbox}{max width=\textwidth}
\begin{tabular}{lcccccc} 
\toprule
\multirow{2}{*}{Task} & \multicolumn{3}{c}{QuerySAT} & \multicolumn{3}{c}{NeuroCore}  \\ 
\cmidrule(lr){2-4}\cmidrule(lr){5-7}
                                       & $s_{test} = 32$ & $s_{test} = 512 $ & $s_{test}=4096$         & $s_{test} =32$ &$s_{test} = 512$ & $s_{test} = 4096$           \\ 
\toprule
k-SAT                                  &$72.12 \pm 0.19$    &$96.61 \pm 0.78$     &$\boldsymbol{99.05} \pm 0.38$            &$21.64 \pm 0.27$   &$46.85 \pm 5.02$    &$50.82 \pm 6.41$              \\ 
\midrule
3-SAT                                  &$61.89 \pm 5.19$    &$88.20 \pm 4.01$     &$\boldsymbol{93.32} \pm 3.21$            &$28.38 \pm 3.24$   &$53.49 \pm 3.94$     &$57.63 \pm 4.38$           \\ 
\midrule
3-Clique                               &$82.00 \pm 4.73$    &$93.06 \pm 4.67$     &$\boldsymbol{94.74} \pm 4.62$                  &$1.03 \pm 0.69$    &$1.03 \pm 0.66$     &$1.04 \pm 0.66$               \\ 
\midrule
k-Coloring           &$91.70 \pm 1.01$    &$97.76 \pm 0.98$     &$\boldsymbol{98.32} \pm 0.82$                  &$0.0 \pm 0.0$    &$0.0 \pm 0.0$     &$0.0 \pm 0.0$               \\ 
\midrule
SHA-1                  &$33.25 \pm 4.17$    &$\boldsymbol{46.57} \pm 1.16$     &$46.45 \pm 1.10$                  &$0.00 \pm 0.0$    &$0.27\pm 0.09$     &$0.24 \pm 0.09$            \\
\bottomrule
\end{tabular}
\end{adjustbox}
\end{table*}

We evaluate the QuerySAT model on several SAT tasks, which are standard benchmarks for classical SAT solvers, and the majority of them are not an easy feat for neural solvers. Namely, we chose k-SAT, 3-SAT, and also 3-Clique, k-Coloring, and SHA-1 preimage attack problems represented as CNF Boolean formulas. All datasets consists only of satisfiable formulas. The QuerySAT architecture is compared to the derivative of NeuroSAT \cite{selsam2018learning} that was used for predicting unsatisfiable cores by \cite{selsam2019guiding} but is generally applicable to any variables-wise predictions on SAT factor graph. Further on, we refer to this NeuroSAT derivative as NeuroCore. The results are also compared to the GSAT and Glucose 4 classical solvers. For all tasks, we generate a train set of 100k formulas and validation and test sets of 10k formulas each. The 3-SAT and SHA-1 tasks are an exception as their test sets consist of 40k and 5k formulas, respectively. For most tasks, we use larger formulas in the test set to evaluate the generalization capability of the model.

The k-SAT task is taken from \cite{selsam2018learning} -- each clause in the $n$ variable formula is generated by sampling a small integer $k$ (size of the clause), and then randomly without replacement taking $k$ of the $n$ variables. Each variable in the clause is then negated with 50\% probability. The resulting clauses consist of roughly four variables on average. The train and validation sets consist of formulas with 3 to 100 variables, but the test set -- with 3 to 200 variables.

Graph and 3-SAT tasks are generated using the CNFGen library \cite{lauria2017cnfgen}, which allows encoding several popular problems as SAT instances. We generate hard 3-SAT instances at the satisfiability boundary where the relationship between the number of clauses ($m$) and variables ($n$) is  $m = 4.258n + 58.26n^{-\frac{2}{3}}$ \cite{crawford1996experimental}. The train and validation sets of 3-SAT tasks consist of formulas with 5 to 100 variables, but test set -- with 5 to 405 variables. 

For the graph-based tasks, we generate  Erdős–Rényi graphs with edge probability $p$. For the 3-Clique task, where the main goal is to find all triangles in the graph, we use $p=3 ^ \frac{1}{3} / (v(2 - 3v + v^2))^ {\frac{1}{3}}$, where $v$ is the vertex count in the graph. For the k-Coloring task we use $p = \frac{(1 + 0.2) \ln v}{v} + 0.05$ and the goal is to color the graph with at least $k$ colours. Such generation produces mostly sparse connected graphs that are colourable using 3 to 5 colours. The graphs are then encoded as SAT instances using the CNFGen library. For both tasks, train and validation sets consist of graphs with 4 to 40 vertices, but the test set -- with 4 to 100 vertices.

We also experiment with the SHA-1 preimage attack task from the SAT Race 2019 \cite{skladanivskyy_minimalistic_2019}. The goal of this task is to find the message value given the hash value. The original SAT competition task uses the SHA-1 algorithm with 17 rounds and asks the solver to find the first up to 160 message bits. Such configuration produces SAT instances at the threshold of satisfiability, and it is expected that only a single solution exists. That is a challenging task even for modern SAT solvers; hence, we use a smaller set-up to find the first 2 to 20 bits of the message. But even then, it is still a moderately hard task. We generate instances for this task using the CGen tool \cite{skladanivskyy2020tailored}.

To make the QuerySAT and NeuroCore architectures comparable, we train NeuroCore with the same per-step loss and apply the same bag of tricks that we used for QuerySAT, as described in Sections \ref{sec:querysat_model} and \ref{sec:tricks}. NeuroCore is also given a chance to return results in any step if the correct solution has been found. The rest of the architecture is left intact.
For both models, we use 128 feature maps that produce similarly sized models and train them with a batch size of 20000 nodes (max node count in the input factor graph), 32 recurrent steps and a learning rate $2 \times 10^{-4}$. Hyperparameters are selected by performing a grid search by hand. On the SHA-1 preimage attack, models are trained for 1M iterations, but on the rest of the tasks for 500k iterations. Afterwards, models are tested with 32, 512, and 4096 recurrent steps and the mean accuracy as a percentage of fully solved instances over 3 consecutive runs is depicted in Table \ref{tab:sat_tasks_results}. We see that QuerySAT with 4096 steps performs consistently the best for all the tasks. It can solve 3-Clique, k-Coloring and SHA-1 tasks on which NeuroCore produces an accuracy of almost zero.

Detailed comparison of both architectures is conducted on the 3-SAT task, where we check step-wise and formula-wise generalization of both models. To evaluate both properties, we use models trained with 32 recurrent steps on formulas with 3 to 100 variables. Fig. \ref{fig:accuracy_on_datasets} depicts generalization to harder formulas (up to 400 variables) with 32 and 4096 recurrent steps at the test time. Fig. \ref{fig:accuracy_on_steps} shows step-wise generalization by changing the number of steps $s_{test}$ from 4 up to 131k when testing on the same formulas with 400 variables. QuerySAT outperforms NeuroCore on both generalization tasks by a wide margin and, with 32 steps, has a similar performance to the NeuroCore with 4096 steps. QuerySAT's performance increases with the step count at the test time, although it is trained only with 32 steps. NeuroCore, on the contrary, plateaus at approximately 16k steps and 40\% accuracy. In Fig. \ref{fig:neural_solver_cactus}, the cactus plot is shown comparing NeuroCore, and QuerySAT tested with 4096 recurrent steps on 3-SAT instances with 5 to 405 variables. Cactus plot shows cumulative time spend for solving instances (y-axis) versus cumulative solved instance count (x-axis). More solved instances in less time indicate better model performance. At the same time interval, QuerySAT solves $\sim 36k$ formulas, while NeuroCore solves only $\sim 18k$ from the total of 40k 3-SAT formulas in the test set (see Fig. \ref{fig:neural_solver_cactus}).

\begin{figure*}[ht]
     \centering
     \begin{minipage}[t]{0.48\textwidth}
         \centering
         \includegraphics[width=0.77\textwidth]{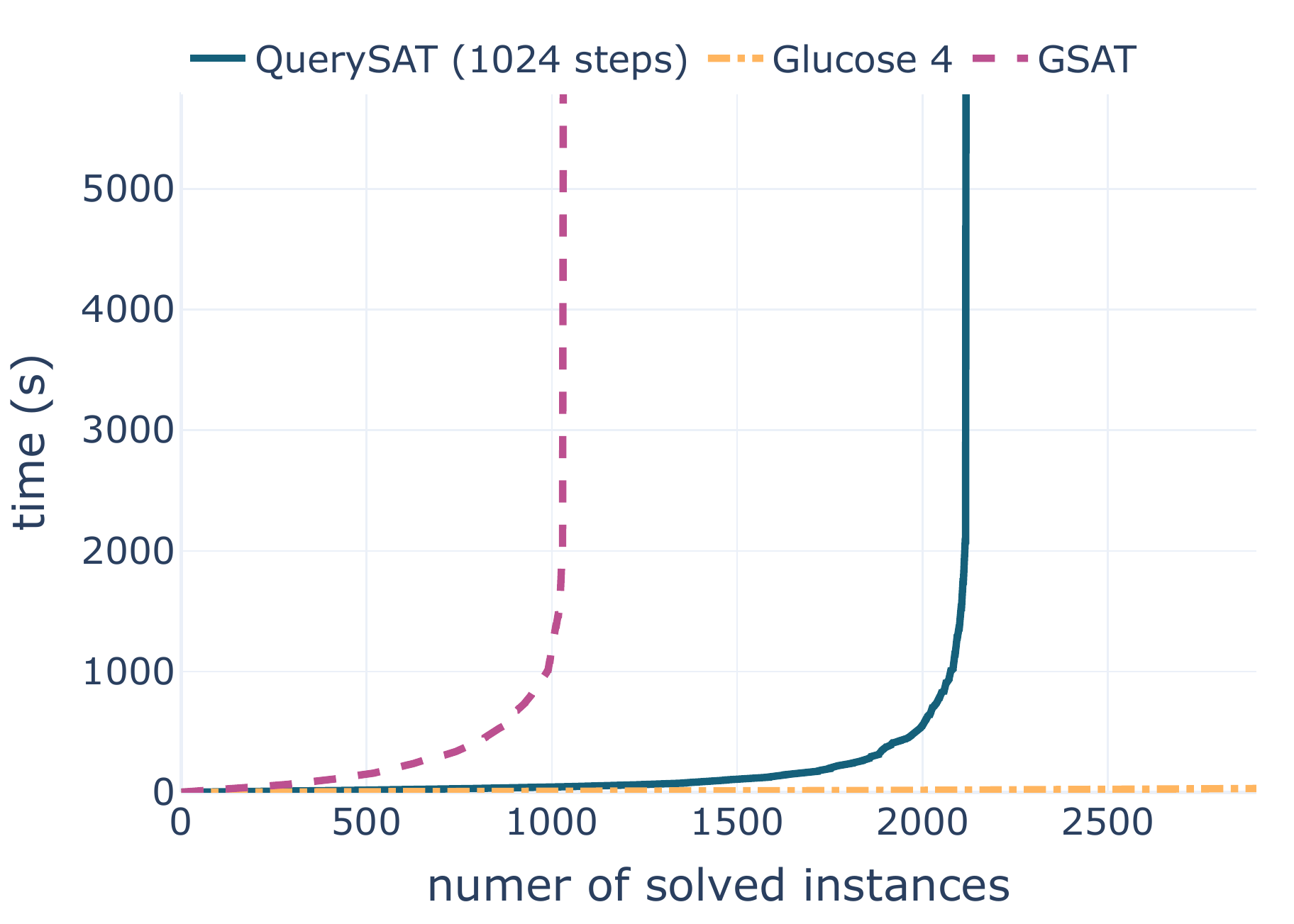}
         \caption{Cactus plot representing the number of solved SHA-1 preimage attack instances vs. used time for QuerySAT, GSAT and Glucose 4 solvers.}
         \label{fig:sha1_query_vs_gsat}
     \end{minipage}
     \hfill
     \begin{minipage}[t]{0.48\textwidth}
         \centering
         \includegraphics[width=0.77\textwidth]{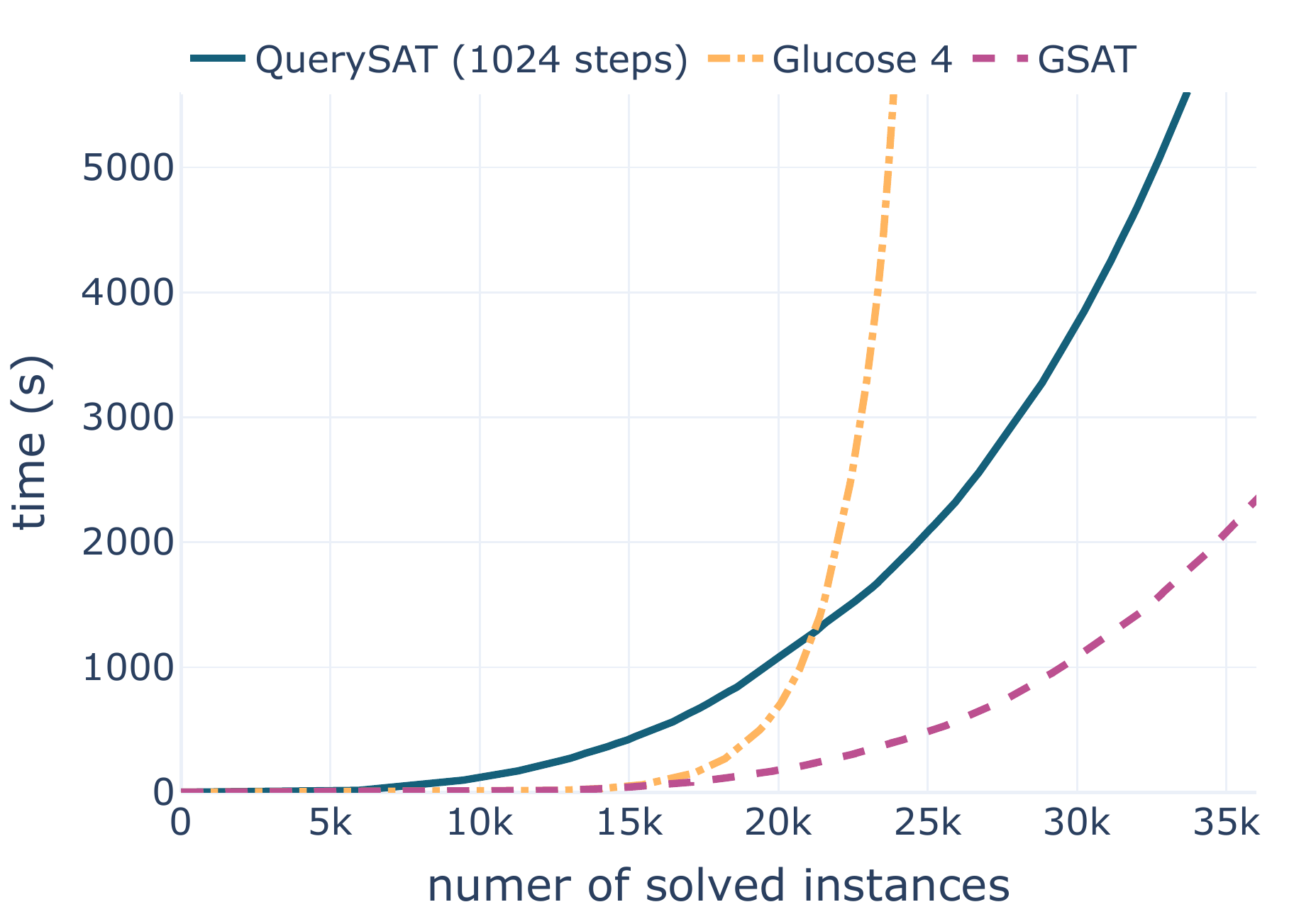}
         \caption{Cactus plot representing the number of solved 3-SAT instances vs. used time for QuerySAT, GSAT and Glucose 4 solvers.}
         \label{fig:scatter_query_vs_gsat}
     \end{minipage}
     \begin{minipage}[t]{0.48\textwidth}
         \centering
         \includegraphics[width=0.77\textwidth]{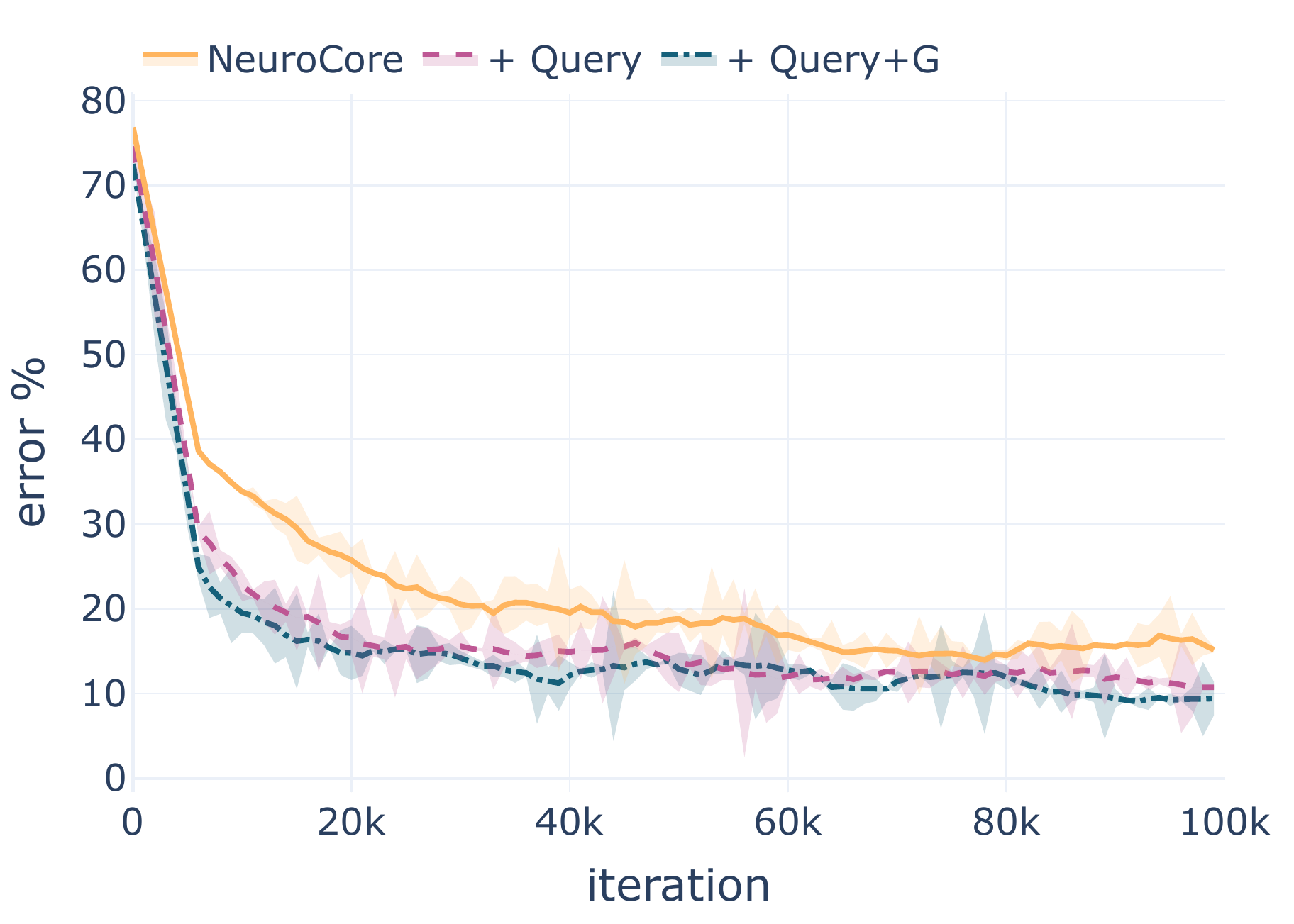}
         \caption{Error as per cent of unsolved instances in the validation set depending on training iteration for the k-SAT task. The validation set consists of 10k formulas with 3-100 variables. $s_{test}=64$ is used for validation.}
         \label{fig:training_k-sat}
     \end{minipage}
     \hfill
     \begin{minipage}[t]{0.48\textwidth}
        \centering
        \includegraphics[width=0.77\textwidth]{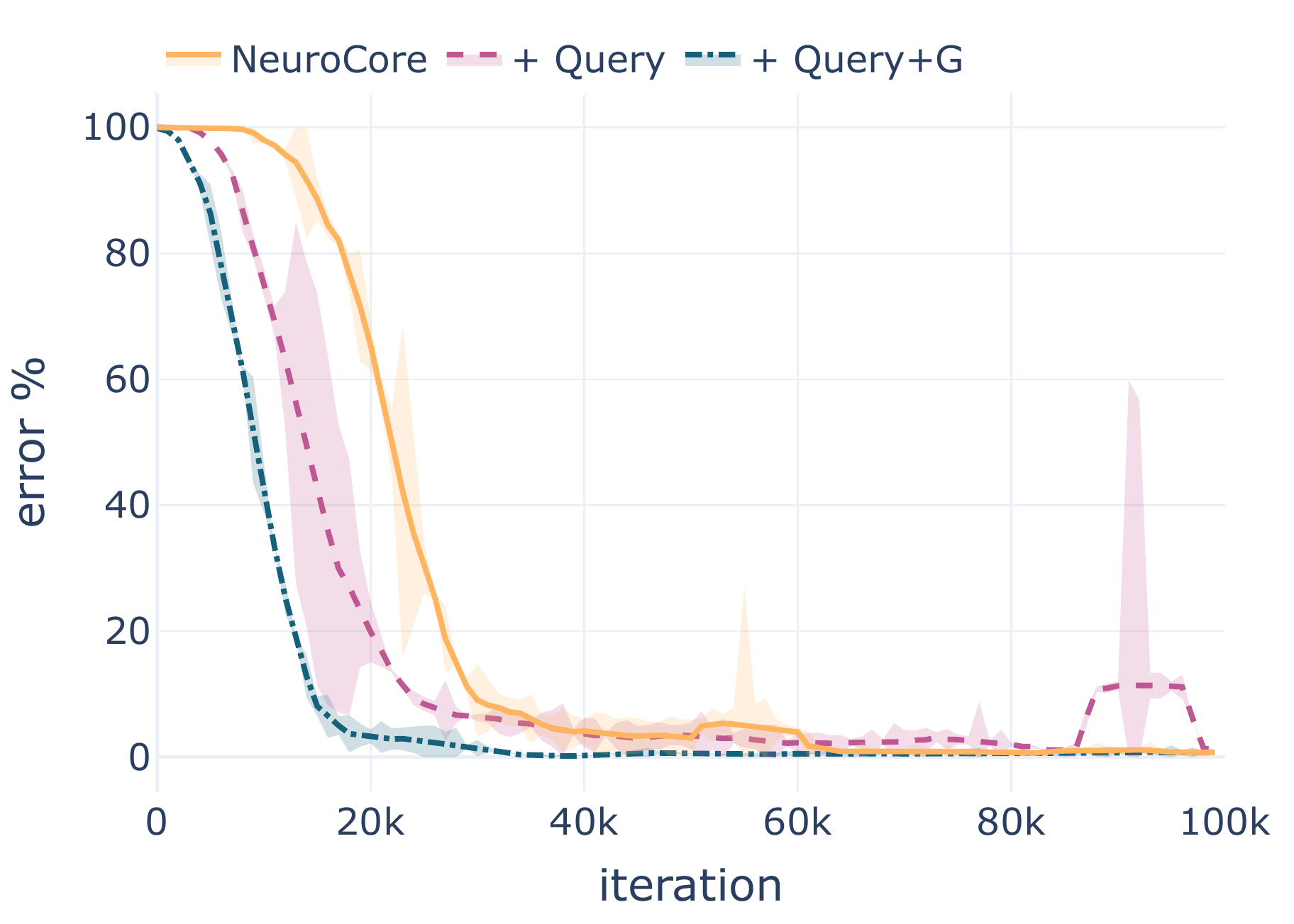}
        \caption{Error as per cent of unsolved instances in the validation set depending on training iteration for the 3-Clique task. The validation set consists of 10k graphs with 4-20 vertices. $s_{test}=64$ is used for validation.}
        \label{fig:training_k-clique}
     \end{minipage}
\end{figure*}

To show current capabilities of QuerySAT, we also compare it to GSAT \cite{selman1992a} and Glucose 4 \cite{audemard2018glucose,een2003extensible} classical solvers. GSAT is a widely known incomplete local-search solver, and Glucose is a contemporary conflict-driven clause learning solver. As the QuerySAT is also an incomplete solver, it's directly comparable to the GSAT algorithm. Comparison to Glucose is not on equal ground since Glucose can certify that some instance is UNSAT while QuerySAT runs indefinitely for such instance. That said, we present their comparison nonetheless to give a rough idea of their relative performance. Note that both - GSAT and Glucose 4 - may not accurately represent the performance of modern state-of-the-art SAT solvers. All solvers are evaluated on the same hardware but note that QuerySAT utilizes a single Nvidia T4 GPU while the others do not. We configure all three solvers to have approximately a 2-second time limit by giving QuerySAT 1024 recurrent steps, GSAT -- 500k steps, and setting a 2-second timeout for Glucose solver. We compare these solvers on previously described test sets of 3-SAT and SHA-1 tasks. The QuerySAT was trained on the forenamed train sets. The results are depicted as cactus plots in the Fig. \ref{fig:sha1_query_vs_gsat} and \ref{fig:scatter_query_vs_gsat}. Glucose utilizes the structure of the problem and therefore struggles on random 3-SAT instances yet solves the SHA-1 task in almost no time. Even though GSAT outperforms QuerySAT on 3-SAT instances, QuerySAT seems to utilize the SHA-1 structure and achieves better performance than GSAT. QuerySAT, similarly to other contemporary end-to-end neural solvers \cite{selsam2018learning, amizadeh2019pdp}, in the general case, is outperformed by classical solvers and requires breakthroughs to allow scaling them to large industrial instances.



\section{Evaluating the query mechanism}\label{sec:eval_query_mechanism}

\begin{table*}[ht]
\centering
\caption{Mean accuracy of NeuroCore and its augmentation with query and gradient and as per cent of fully solved instances from the test set over 3 consecutive runs. All models are trained with 32 recurrent steps and evaluated with 32, 512, and 4096 steps. Value after $\pm$ indicates the standard error. Note that the training, validation and test instances are smaller than in Table \ref{tab:sat_tasks_results}. 
}
\label{tab:query_generalization}
\begin{adjustbox}{max width=\textwidth}
\begin{tabular}{lcccccc} 
\toprule
\multirow{2}{*}{}                        & \multicolumn{3}{c}{k-SAT  }                                                 & \multicolumn{3}{c}{3-Clique}              \\ 
\cmidrule(lr){2-4}\cmidrule(lr){5-7}
                                              & \multicolumn{1}{c}{$s_{test} = 32$} & \multicolumn{1}{c}{$s_{test} = 512$} & \multicolumn{1}{c}{$s_{test} = 4096$} & \multicolumn{1}{c}{$s_{test} = 32$} & \multicolumn{1}{c}{$s_{test} = 512$} & \multicolumn{1}{c}{$s_{test} = 4096$} \\ 
\toprule
\textit{NeuroCore}
&\begin{tabular}{@{}c@{}}$38.86 \pm 1.04$\end{tabular}
&\begin{tabular}{@{}c@{}}$56.14 \pm 3.48$\end{tabular}
&\begin{tabular}{@{}c@{}}$60.97 \pm 6.19$\end{tabular}
&\begin{tabular}{@{}c@{}}$65.38 \pm 5.09$\end{tabular}
&\begin{tabular}{@{}c@{}}$67.12 \pm 3.25$\end{tabular} 
&\begin{tabular}{@{}c@{}}$70.39 \pm 2.68$\end{tabular} \\
\midrule
\textit{\, + Query}
&\begin{tabular}{@{}c@{}}$41.09 \pm 2.33$\end{tabular}
&\begin{tabular}{@{}c@{}}$61.21 \pm 9.89$\end{tabular}
&\begin{tabular}{@{}c@{}}$67.21 \pm 13.54$\end{tabular}
&\begin{tabular}{@{}c@{}}$62 .90 \pm 2.55$\end{tabular}
&\begin{tabular}{@{}c@{}}$78.19 \pm 2.67$\end{tabular}
&\begin{tabular}{@{}c@{}}$84.55 \pm 2.93$\end{tabular} \\
\midrule
\textit{\, + Query + G}
&\begin{tabular}{@{}c@{}}$\boldsymbol{47.92 \pm 1.66}$\end{tabular}
&\begin{tabular}{@{}c@{}}$\boldsymbol{71.95 \pm 6.59}$\end{tabular}
&\begin{tabular}{@{}c@{}}$\boldsymbol{75.50  \pm 7.15}$\end{tabular}
&\begin{tabular}{@{}c@{}}$\boldsymbol{86.13 \pm 4.83}$\end{tabular}
&\begin{tabular}{@{}c@{}}$\boldsymbol{93.96\pm 2.59}$\end{tabular}
&\begin{tabular}{@{}c@{}}$\boldsymbol{95.50\pm 1.95}$\end{tabular} \\
\bottomrule
\end{tabular}
\end{adjustbox}
\end{table*}


We evaluate the impact of the query mechanism by augmenting NeuroCore architecture with it and measuring the improvement. It is straightforward to do since NeuroCore is similar to QuerySAT. NeuroCore uses literals-to-clauses and clauses-to-literals message-passing to update the internal states for literals and clauses. Therefore, we add a query mechanism alongside the literals-to-clauses message-passing and also give the gradient of the evaluated query to the MLP that updates the literals state.  We experiment with two variants of the query mechanism: with both query and gradient (\textit{+ Query + G}) and only with query (\textit{+ Query}). Both versions are compared with the standard NeuroCore architecture (\textit{NeuroCore}).

We chose k-SAT and 3-Clique tasks (see section \ref{sec:querysat_evaluation}) for evaluation. We use the same dataset split as previously. For the k-SAT task, we use the same train and validation set as previously, but for the test set generate formulas with 100 to 200 variables. On the same note, train and validation sets for the 3-Clique task consists of graphs with 4 to 20 vertices but the test set of 20 to 40 vertices. The test set consists of harder formulas to see how various variants of query mechanism impacts generalization. 


All three versions are trained for 100k iterations with the same training and network configuration as described in Section \ref{sec:querysat_evaluation}. The trained model is then tested with 32, 512, and 4096 recurrent steps. The mean results of 3 consecutive runs are depicted in Table \ref{tab:query_generalization}. We also include validation error in the train time for each version using 64 recurrent steps ($s_{test}$), it is depicted in the Fig. \ref{fig:training_k-sat} and \ref{fig:training_k-clique}.

In Table \ref{tab:query_generalization}, we can see that both versions with a query mechanism outperform the NeuroCore baseline. The version with a query mechanism and gradient trains faster and achieves better accuracy. Interestingly, adding a gradient significantly improves the model's accuracy of the 3-Clique task. 
We reason that gradient is helpful for the tasks that represented as a SAT instance has a distinct structure. Such structure is very expressive for the 3-Clique task, yet, on the contrary, k-SAT doesn't have any distinct structure as it is sampled from a uniform distribution \cite{yolcu2019learning}.

\section{Related work}
Neural networks have been proposed as an effective alternative for automatically developing heuristic algorithms for NP-hard problems \cite{bengio2020machine, cappart2021combinatorial}. Two main research directions are replacing handcrafted heuristics with a neural network in a classical solver and building end-to-end neural solvers.

\cite{selsam2018learning} proposed a Graph Neural Network (GNN) architecture called NeuroSAT that is trained using single-bit supervision to predict whether the Boolean formula is satisfiable or not. The variable assignment is obtained from the last layer by performing  2-clustering on the output, but the network is never optimized for producing the assignment explicitly. Hence, it is not clear why such an approach should work.  In a later work, \cite{selsam2019guiding} simplified NeuroSAT architecture (they call it NeuroCore)  and used it for guiding high-performance solvers (e.g.,  MiniSAT, Glucose) by predicting how likely each variable is in the unsatisfiable core.  \cite{kurin2019improving} use Q-learning to train similar GNN for predicting branching heuristics in MiniSAT solver. Others \cite{azar2020nngsat, kurin2019improving} similarly augment classical solvers with neural networks to solve 2-Quantified Boolean Formulas and logic locked circuits.  On the same trend, \cite{yolcu2019learning} trained a GNN using reinforcement learning to learn a local-search heuristic that finds a solution similar to the GSAT / WalkSAT algorithm by flipping a single variable in each step. It is also theoretically shown that GNN can learn to mimic the WalkSAT algorithm \cite{chen2019graph}. 

Supervised and reinforcement learning are not the best options for solving SAT, as the variable assignment is time-consuming to obtain, many possible solutions exist, and training may be slow. \cite{amizadeh2018learning} proposed a differentiable unsupervised approach for directly solving Circuit-SAT. For this purpose, they represent variables as real numbers in the range $[0,1]$ and use GNN for producing variable predictions on the Circuit-SAT graph. The predicted variables are evaluated as Circuit-SAT, but the AND and OR functions are substituted with differentiable Softmax and Softmin functions. In a later work \cite{amizadeh2019pdp}, they have applied the same method for solving SAT problems. To encourage exploration, they introduce a temperature parameter that is annealed in the training time, making training cumbersome. Although the training method is closely related to ours, their loss function is more complex.  \cite{kyrillidis2020fouriersat} lately proposed to represent Boolean functions by multilinear polynomials and find a solution to a single formula by using gradient descent optimization. Even though their loss function is similar to ours, they solve the MAX-SAT problem and do not directly optimize towards finding a solution to the SAT problem.


\section{Conclusions}
In this paper, we have proposed a query mechanism that allows the neural network to make several solution trials, obtain the loss of each trial and change its strategy accordingly. To evaluate the impact of the query mechanism, we propose an unsupervised SAT loss and integrate it with queries to form the QuerySAT architecture. We find that QuerySAT outperforms the message-passing neural baseline on all proposed tasks: k-SAT, 3-SAT, 3-Clique, k-Coloring, and SHA-1 preimage attack. Experiments show that query mechanism can significantly increase the performance of message-passing graph neural networks. To give a better insight into the current capabilities of QuerySAT, we also compare it with classical solvers on 3-SAT and SHA-1 preimage attack tasks. Although we have analyzed only SAT solving, we expect a similar benefit from the query mechanism on other neural architectures and tasks. Since QuerySAT employs unsupervised loss, not requiring to know the labels, it provides rich opportunities for integration with classical solvers.

\section*{Acknowledgment}

 We would like to thank the IMCS UL Scientific Cloud for the computing power and Leo Trukšāns for the technical support. This research is supported by Google Cloud and funded by the Latvian Council of Science, projects No.~lzp\nobreakdash-2018/1\nobreakdash-0327, lzp-2021/1-0479.

\bibliographystyle{IEEEtran}
\bibliography{bibliography.bib}

\clearpage
\begin{appendices}

\section{Proof of Theorem \ref{thm:int_points}}\label{apx:int_points}

\thintpoints*
\begin{proof}
The idea is to give proof in two steps. First, we prove that when a binary query satisfies the formula, then formula value $\mathcal{L}_{\phi}(x)$ and all clauses values $V_c(x)$ should be 1. Secondly, when a binary query does not satisfy the formula, at least one clause value and formula value should be 0.

Firstly, assume that the query corresponds to a satisfying assignment, then all clauses are satisfied. That implies that in each clause, there is at least one positive literal with its query value being 1 or a negative literal with its query value being 0. Therefore in each clause loss $V_c(x) = 1 - \prod_{i \in c^+}(1-x_i)\prod_{i \in c^-}x_i$ at least one of the elements of the product is 0. That implies that all clause losses $V_c(x)$ are equal to 1. Therefore the loss $\mathcal{L_{\phi}}(x) = \prod_{c \in \phi}V_c(x)$ is 1.

Secondly, assume the query corresponds to an unsatisfying assignment, then there is at least one clause $c$ that is unsatisfied. This implies that in the unsatisfied clause $c$ the query value for all positive literals is 0 and for all negative literals it is 1. Therefore in the clause loss $V_c(x) = 1 - \prod_{i \in c^+}(1-x_i)\prod_{i \in c^-}x_i$ all elements of the product are 1. This implies that for the unsatisfied clause the corresponding clause loss $V_c(x)$ is 0. Therefore the loss $\mathcal{L_{\phi}}(x) = \prod_{c \in \phi}V_c(x)$ is 0, since one of the elements of the product is 0.
\end{proof}

\section{Proof of Theorem \ref{thm:instance_to_be_solved}}\label{apx:instance_to_be_solved}

\thsolvedinstances*
\begin{proof}

The idea is to choose the query input values based on primes such that the clause loss contains an irreducible fraction with the numerator and denominator consisting of prime factors which point to the variables that appear in the clause. From the measured clause loss value, it is possible to infer the corresponding irreducible fraction, and from this fraction, we can infer the variables and the signs of their corresponding literals in the clause.

An element of the query $x=( x_{1} ,x_{2} ,\ldots ,x_{n}) \in \mathbb{R}^{n}$ corresponds to a value of a variable queried at an intermediate (real) point between $0$ and $1$, where $n$ is the number of variables. Each query element is obtained by using a pair from the series of prime pairs where the gap between the two primes in the pair is a fixed constant $H$. For $H=2$ the series would be pairs of twin primes: $(3,5), (5,7), (11,13), \ldots $ . It has been proven that there is a constant $H$ between 2 and 246 such that the corresponding prime pair series contains an infinite number of elements \cite{polymath2014variants}. We fix a concrete value of $H$ for constructing the query.

For constructing the query from the series, we take the first $n$ pairs $( a_{1} ,b_{1}) ,( a_{2} ,b_{2}) ,...,( a_{n} ,b_{n})$ that fulfill the following criteria: $\forall _{i} \ a_{i} >H$ and $\forall _{i,j} \ a_{i} \neq b_{j}$. Note that $b_{i}-H = a_{i}$. We set the query $x$ as $( H/b_{1} ,H/b_{2} ,...,H/b_{n})$.

Let us examine the clause loss $V_c(x) = 1 - \prod_{i \in c^+}(1-x_i)\prod_{i \in c^-}x_i$ corresponding to a clause $c$ and the query $x$ with $i$ positive and $k-i$ negative literals. To simplify the notation, we assume without a loss of generality that the first $k$ variables appear in the clause and that the positive literals have lower indices than negative literals.

\begin{equation*}
    \scalemath{0.75}{
    \begin{aligned}
        V_c(x) &=1-(1-x_{1})(1-x_{2}) \ldots (1-x_{i})x_{i+1}x_{i+2} \ldots x_{k} = \\
        &=1-\left(1-\frac{H}{b_{1}}\right)\left(1-\frac{H}{b_{2}}\right) \ldots \left(1-\frac{H}{b_{i}}\right)\frac{H}{b_{i+1}}\frac{H}{b_{i+2}} \ldots \frac{H}{b_{k}} = \\
        &=1-\frac{b_{1} -H}{b_{1}}\frac{b_{2} -H}{b_{2}} \ldots \frac{b_{i} -H}{b_{i}}\frac{H}{b_{i+1}}\frac{H}{b_{i+2}} \ldots \frac{H}{b_{k}} = \\
        &=1-\frac{a_{1}}{b_{1}}\frac{a_{2}}{b_{2}} \ldots \frac{a_{i}}{b_{i}}\frac{H}{b_{i+1}}\frac{H}{b_{i+2}} \ldots \frac{H}{b_{k}} =1-\frac{a_{1} a_{2} \ldots a_{i} H^{k-i}}{b_{1} b_{2} \ldots b_{k}}
    \end{aligned}
    }
\end{equation*}

The indices of the primes $b_{1} b_{2} \ldots b_{k}$ in the denominator of the fraction in the obtained expression correspond to the indices of variables appearing in the clause $c$. The indices of the primes $a_{1} a_{2} ...a_{i}$ in the numerator correspond to the indices of variables appearing as positive literals in the clause.

The primes $a_{1} a_{2} \ldots a_{i}$ and $b_{1} b_{2} \ldots b_{k}$ are non-overlapping and the prime factors of $H$ are smaller than any prime $a_{i}$ or $b_{i}$ due to the pair selection criteria. This implies that the fraction in the representation of the clause loss that we obtained is irreducible.

We prove that given the constructed query $x$, the clause loss function is injective. To show that, we take a clause $c'$ that is different from our arbitrarily chosen clause $c$. The clause $c'$ being different from clause $c$ implies that it will differ in at least one literal, and therefore the irreducible fraction in the corresponding clause losses $V_c'(x)$ and $V_c(x)$ will differ in $a_{1} a_{2} \ldots a_{i}$ or $b_{1} b_{2} \ldots b_{k}$. Due to the fundamental theorem of arithmetic, there is only one way to write a number as a product of its prime factors (up to the order of the factors). The numerator or the denominator of the fraction in the losses will differ because their prime factors will differ. Therefore any two different clauses will produce clause losses that each contain a different irreducible fraction. Since each rational number can be written as an irreducible fraction in exactly one way, the two clause losses are different rational numbers, and hence the clause loss function is injective for the constructed query.

Therefore the clause can be identified from a query and its corresponding loss for the clause. Since the instance $\phi$ to be solved is uniquely represented by its constituent clauses, it can be identified from a query and a set containing a clause loss for each of its clauses.
\end{proof}

\section{Query results}\label{apx:query_use}

\begin{figure*}[ht]
     \centering
     \begin{minipage}[t]{0.32\textwidth}
        \centering
        \includegraphics[width=\textwidth]{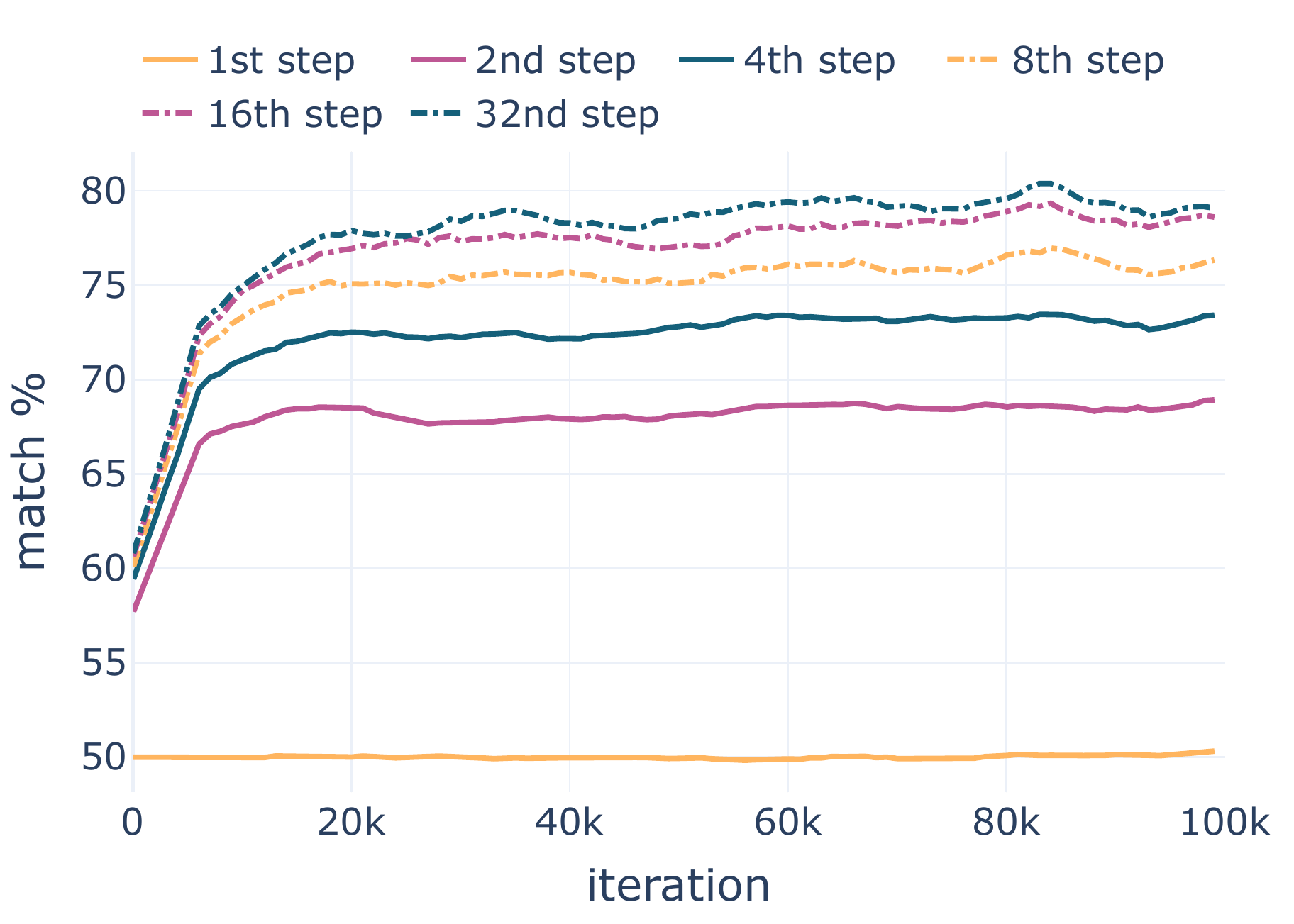}
        \caption{Match between query and logits (higher value means more similar) of the same recurrent step depending on the training iteration on the 3-SAT task.}
        \label{fig:query_vs_logits}
     \end{minipage}
     \hfill
     \begin{minipage}[t]{0.32\textwidth}
        \centering
        \includegraphics[width=\textwidth]{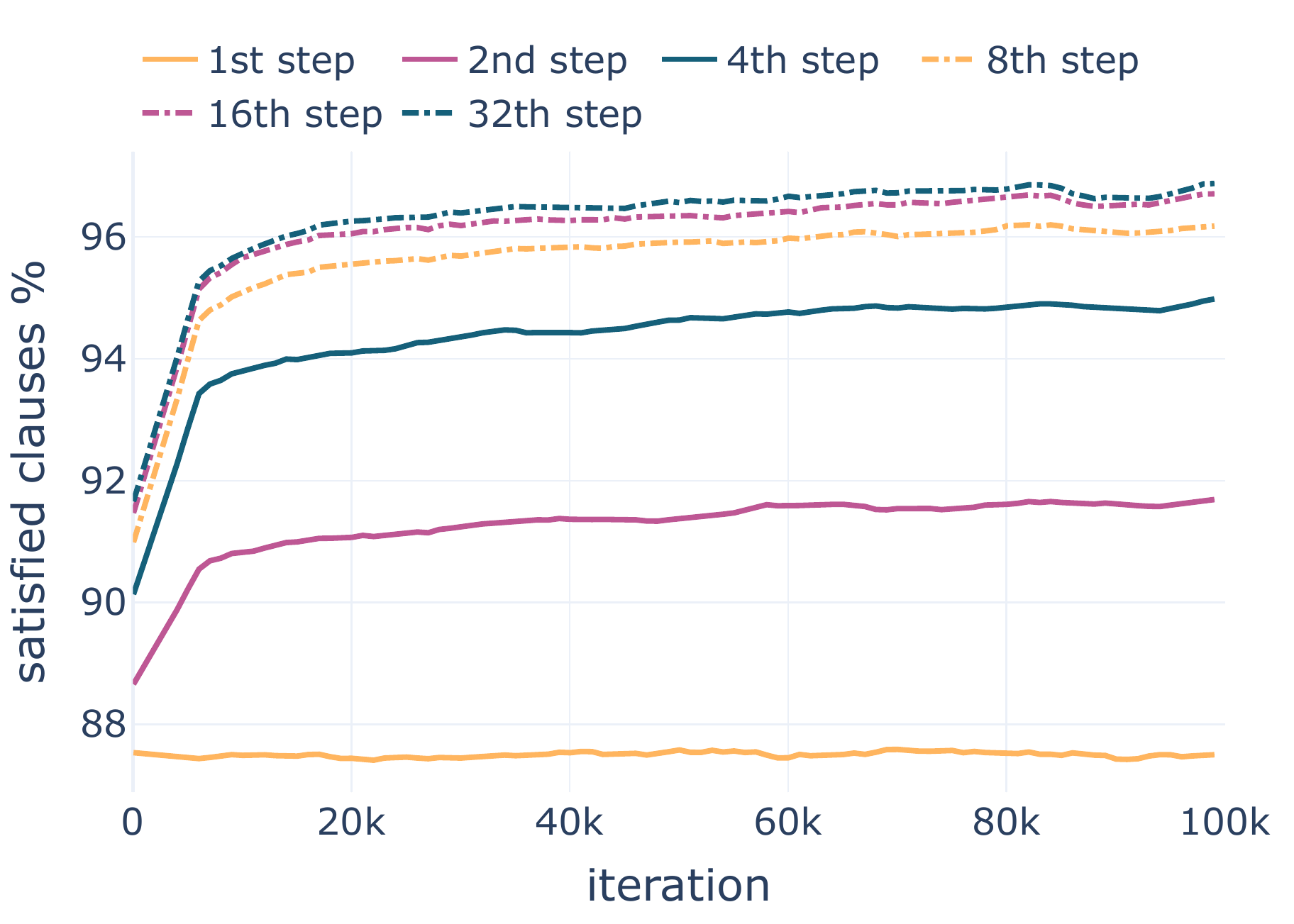}
        \caption{Per cent of clauses satisfied by query depending on the training iteration on the 3-SAT task.}
        \label{fig:sat_clauses}
     \end{minipage}
     \hfill
     \begin{minipage}[t]{0.32\textwidth}
        \centering
        \includegraphics[width=\textwidth]{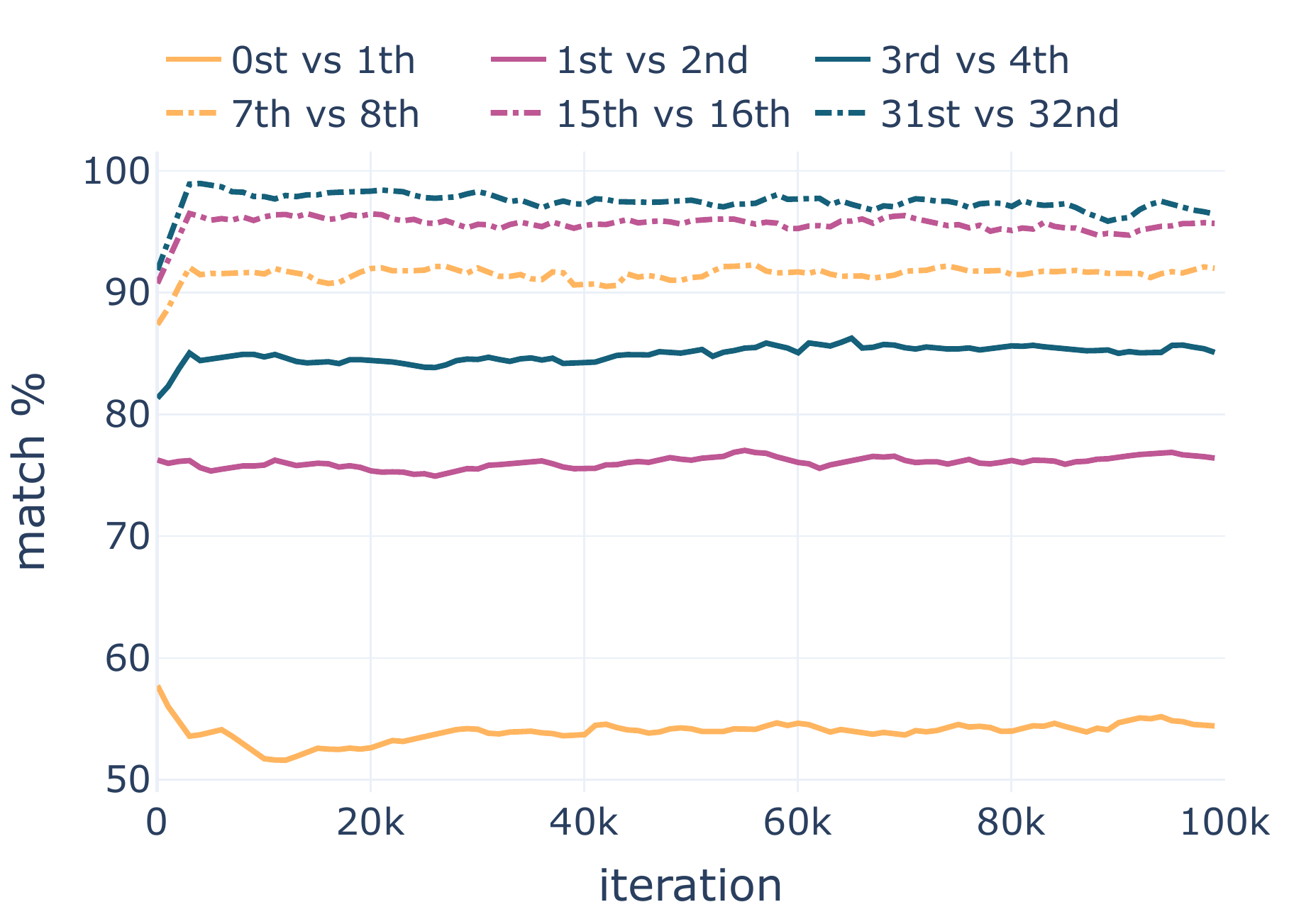}
        \caption{Element-wise match (higher values indicate greater similarity) between two queries in consecutive steps depending on the training iteration on the 3-SAT task.}
        \label{fig:queries_diff}
     \end{minipage}
\end{figure*}

To analyze how the model uses queries and what information they contain, we train NeuroCore with the query mechanism on the 3-SAT task for 100k iterations as described in the section \ref{sec:eval_query_mechanism}.  At each recurrent step, query and logit values are discretized by applying the sigmoid function and then rounding values to the closest integer (0 or 1).  Fig. \ref{fig:sat_clauses} reveals how many clauses the discretized query satisfies at each step in the training time. Fig. \ref{fig:query_vs_logits}, on the other hand, depicts an element-wise match between discretized queries and logits as per cent of identical variables from the total variable count. In Fig. \ref{fig:query_vs_logits}, we can see that the queries have different values from the logits of the same step and from the Fig. \ref{fig:sat_clauses} we can see that they never satisfy given Boolean formula, strengthening our belief that queries are not used to output the true prediction but instead to obtain rich information about the problem.

We also validate that model produces different queries at each recurrent step, therefore obtaining rich information about the problem. Fig. \ref{fig:queries_diff} depicts the match as per cent of equal elements from the total element count between two discretized queries of two consecutive steps. As reasoned in section \ref{sec:sat_loss}, query mechanism can extract more information about the problem by issuing several different queries at each step and accumulating this knowledge. In Fig. \ref{fig:queries_diff}, we can see the difference between the last and second last query increase with the training time. That indicates that the model learns to issue more different queries, increasing the obtained information about the problem.

\end{appendices}
\end{document}